\definecolor{alizarin}{rgb}{0.82, 0.1, 0.26}
\theoremstyle{plain}
\newtheorem{theorem}{Theorem}[section]
\theoremstyle{definition}
\theoremstyle{remark}
\title{Copula for Instance-wise Feature Selection and Ranking}
\author{Hanyu Peng, Guanhua Fang, Ping Li\\
Cognitive Computing Lab\\ Baidu Research\\
No.10 Xibeiwang East Road, Beijing 100193, China\\
10900 NE 8th St. Bellevue, Washington 98004, USA\\
{\tt \{hanyu.peng0510, fanggh2018, pingli98\}@gmail.com}
}
\begin{document}
\maketitle

\begin{abstract}
  Instance-wise feature selection and ranking methods can achieve a good selection of task-friendly features for each sample in the context of neural networks. However, existing approaches that assume feature subsets to be independent are imperfect when considering the dependency between features. To address this limitation, we propose to incorporate the Gaussian copula, a powerful mathematical technique for capturing correlations between variables, into the current feature selection framework with no additional changes needed. Experimental results on both synthetic and real datasets, in terms of performance comparison and interpretability, demonstrate that our method is capable of capturing meaningful correlations.
\end{abstract}
\section{Introduction}
The primary goal of feature selection is to select the most relevant features, and simultaneously help benefit the downstream tasks like recognition~\citep{nilsson2007consistent,stanczyk2015feature}, clustering~\citep{dash2000feature,liu2005toward,witten2010framework}. Traditional approaches view feature selection and downstream tasks as two separate entities. However, recent methods based on neural networks have achieved greater performance by combining the two, resulting in a differentiable optimization. Furthermore, the importance of features can vary from sample to sample; one set of features may be useful for recognizing one thing, while a completely different set of features may be necessary to identify something else. Therefore, it is necessary for feature selection to be done on a case-by-case basis, which is known as instance-wise feature selection~\citep{yoon2019invase,chen2018learning,masoomi2020instance}, it \textit{selects unique features} for each sample.

Characteristics in reality are often correlated, such as a person's height and weight, economic status and wealth, etc. The identification of feature correlations can \textit{minimize the redundancy of features}. Yet, in the literature of instance-wise feature selection and ranking methods~\citep{chen2018learning,yoon2019invase,abid2019concrete,masoomi2020instance,wu2018tracing} that \textit{follow the context of neural networks}, the dependencies between features has not been considered manifestly. For instance, L2X~\citep{chen2018learning} performs a feature selection for maximizing the mutual information between selected feature subsets and corresponding outputs. Lowering the KL distance between the selected features and all features, is the guideline of INVASE~\citep{yoon2019invase}. However, little or even no consideration in modeling correlations can pose a challenge to tasks.

To address this matter, we \textit{underscore the importance of capturing the pairwise relationship between features}, and explicitly model it through a probabilistic framework. This paper examines both binary feature selection and top-$k$ feature ranking. Specifically, for the top-$k$ feature ranking problem, the number of features chosen is assumed to be known and unchanging. However, for binary feature selection, the number of features varies across samples, as different samples may have different feature significance. For both scenarios, two sampling policies are presented in the paper, along with a corresponding neural network implementation to unite feature selection and task learning.

\textbf{Our contributions} are summarized as follows:
\begin{itemize}

\item We explore the explicit dependence between features through copula, such as instance-wise feature selection and top-$k$ ranking, as the initial step. 

\item Two sampling schemes, implemented via neural networks, have been carefully crafted to ensure accuracy and efficiency. Moreover, they have been rigorously tested and verified to guarantee their efficacy.

\item 
The experimental results have been found to be highly indicative and of superior quality, as evidenced by metrics such as accuracy, true positive rate (TPR), and false discovery rate (FDR). Indeed, these results demonstrate a remarkable explanatory power.
\end{itemize}

\section{Preliminaries and Related Work}

\textbf{Notation}: 
Lowercase typeface letters (${x}$) represent scalar, lowercase bold typeface letters ($\boldsymbol{x}$) represent vectors, uppercase typeface letters ($X$) stand for random variable, uppercase bold typeface letters ($\boldsymbol{X}$) represent matrix.

\subsection{Feature Selection}

In recent decades, feature selection has evolved from a specialized field to a commonplace technology in the machine learning community~\citep{guyon2003introduction,koller1996toward,liu2007computational}, and has been employed to simplify systems and make them more interpretable to researchers~\citep{guyon2007causal,brown2012conditional}. Existing works on instance-wise feature selection can be divided into two main categories: binary feature selection~\citep{chen2018learning,yoon2019invase} and top-$k$ feature ranking~\citep{abid2019concrete,yamada2020feature}. The distinction between them is that the number of binary feature selection is dependent on the instance, while the number of top-$k$ feature ranking is user-defined and known. Our work is akin to~\citet{yoon2019invase} and~\citep{chen2018learning}, in that all have a 3-stage network structure. ~\cite{sokar2022pay} is based on sparse autoencoders and uses a new sparse training algorithm to quickly attend to informative features, but does not provide theoretical analysis or justification for the algorithm. The second method, called Stochastic Gates~\cite{yamada2020feature}, selects a small subset of features based on probabilistic relaxation of the `0 norm, and simultaneously learns a non-linear regression or classification function. However, it does not provide a direct differentiable top-k feature selection method and does not explore the correlation between features in a differentiable manner. Nevertheless, none of the aforementioned works have explicitly modeled the dependency; instead, we propose incorporating a Gaussian copula into binary feature selection and top-$k$ feature ranking.

\subsection{Reminders on Copula}

 Copula~\citep{cherubini2004copula,trivedi2007copula,jaworski2010copula} is a powerful tool for describing the correlation between variables by modelling their joint distributions, given the known marginal distributions where each variable follows the uniform distribution $[0,1]$. It can be combined with a variational auto-encoder (VAE) to address the issue of posterior collapse in the latent space~\citep{zizhuangwang2019neural}. On the other side,~\cite{wang2020relaxed} introduces a new distribution called the relaxed multivariate Bernoulli distribution (RelaxedMVB), which is a  reparameterizable relaxation of the multivariate Bernoulli distribution. The RelaxedMVB combines the Gaussian copula and the
Relaxed Bernoulli to create a continuous relaxation of
the multivariate Bernoulli distribution. Furthermore,~\citet{suh2016gaussian} proposed to employ Gaussian copula to model the local dependency. Additionally, copula has been extensively applied in finance applications~\citep{cherubini2004copula,cherubini2011dynamic}. 

We briefly review the copula theory. For any continuous random variables, $Y_1, \ldots, Y_d$, let their marginal distributions be $F_1(y_1)=P(Y_1 \leq y_1), \ldots, F_d(y_d)=P(Y_d \leq y_d)$ correspondingly. 
Then it is easy to see that $U_i := F_i(Y_i)$ follows the uniform distribution on $[0,1]$ for $i \in [d]$.
And the copula of $U_1, \ldots, U_d$ is defined as the joint cumulative distribution function (C.D.F) of $(U_1, \ldots, U_d)$.
\begin{eqnarray}
C(u_1, \ldots, u_d) := P(U_1 \leq u_1, \ldots, U_d\leq u_d), ~~ U_i = F(Y_i) \nonumber 
\end{eqnarray}
is a function from $[0,1]^d$ to $[0,1]$. We can rewrite the above formula as:
\begin{eqnarray}
C(u_1, \ldots, u_d) := P(Y_1 \leq y_1, \ldots, Y_d\leq y_d), \nonumber
\end{eqnarray}
One of the most helpful copula is known as the Gaussian copula, which is constructed from a multivariate normal distribution over $ \mathbb{R}^d$ by using the probability integral transform.
Its explicit formula is given as 
\begin{eqnarray}\label{eq:copula:def}
C_{gaussian}(u_1, \ldots, u_d; \boldsymbol R) 
= \Phi_{ \boldsymbol R}(\Phi^{-1}(u_1), \ldots, \Phi^{-1}(u_d)), \nonumber
\end{eqnarray}
where $\Phi_{\boldsymbol R}(\cdot)$ is the joint cdf of multivariate normal distribution with mean zero and correlation/covariance matrix $\boldsymbol R  \in \mathbb R^{d \times d}$, $\Phi^{-1}(\cdot)$ is the inverse of a set of marginal Gaussian cdf. Off-diagonal elements in $\boldsymbol R$ capture  pairwise relation between different marginals.

\section{Method}\label{sec:method}

In this section, we shall commence by presenting the problem statement. Thereafter, we shall introduce two sampling schemes for binary feature selection and top-$k$ feature ranking, respectively. Subsequently, we shall provide the overall algorithm workflow. Lastly, we shall discuss the details of implementing neural networks in greater detail.

\subsection{Problem Formulation}

Given a collection of samples $\{(\boldsymbol{x}^{n},{y}^{n})\}_{n=1}^N$, where $\boldsymbol{x}^{n}=\left(x_1^{n}, \ldots, x_d^{n}\right)$ is a $d$-dimension feature vector, ${y}^{n}$ is the corresponding output. Feature selection seeks to select a subset of features and simultaneously learn a task-specific objective function under certain loss metrics as follows:
\begin{align*}
 \label{eq:fs-form}
\min_{\boldsymbol{\theta}} &~ \frac{1}{N} \sum_{n = 1}^{N} \mathcal{L} \left[ \Phi\left( \boldsymbol{\theta}; \boldsymbol{x}^{n} \odot \boldsymbol{z}^{n} \right), {y}^{n} \right] , \\
\text{subject to}&\quad \boldsymbol{z}^{n}=\mathcal S \left(\boldsymbol \alpha^{n} \right); \boldsymbol \alpha^{n} = f\left(\boldsymbol{w},\boldsymbol{x}^{n}\right), 
\end{align*}
where $\boldsymbol{\theta},\boldsymbol{w}$ denote the learnable parameters, $ \boldsymbol \alpha^{n}$ represents the learned score, $\odot$ represents the element-wise product, $\boldsymbol{z}^{n}=\left(z_{1}^n,\ldots,z_{d}^n \right)\in \{0,1\}^d$ stands for the feature indicator, $\mathcal{L}$ is the objective function, $\mathcal S(\cdot)$ represents the sampling function, $f \left( \boldsymbol{w}, \cdot \right)$ and $\Phi \left( \boldsymbol{\theta}, \cdot \right)$ stand for the mapping function to learn $ \boldsymbol \alpha^{n}$ and infer prediction, respectively. For top-$k$ feature ranking, since the specific size of active features is established, we add an \textit{auxiliary constraint} $\big\| \boldsymbol{z}^{n} \big\|_{0} = k, \forall n=1\cdots N$ to loss function. In contrast, the number of binary feature selection is sample-dependent, the \textit{regularization term} $\frac{\lambda}{N} \sum_{n=1}^N \big\| \boldsymbol{z}^{n} \big\|_{1}$ that controls selected features number, where $\lambda$ is the trade-off parameter.

The above expression shows that the selection of features is dictated by the variable $\boldsymbol{z}^{n}$. Also on the other hand, $\boldsymbol{z}^{n}$ is depended on $\boldsymbol \alpha^{n}$. Thus the final performance is greatly affected by the way how we model $\boldsymbol \alpha^{n}$. Due to expressive power and complexity of neural network, we naturally introduce neural network e.g. multilayer perceptron (MLP) to learn the mapping $\boldsymbol \alpha^{n}=f\left(\boldsymbol{w},\boldsymbol{x}^{n}\right)$.

Despite we have known how to parameterize $\boldsymbol \alpha^{n}$, sampling function $\mathcal{S}(\cdot)$ usually is non-differentiable, as non-differentiable methods may have a large variance. It will hinder the usage of standard back-propagation algorithm and has large variance. Besides, $\mathcal{S}(\cdot)$ mostly assumes elements of $\boldsymbol{z}^{n}$ to be independent, like Gumbel-Softmax~\citep{jang2017categorical}. To address the first issue, we develop two sampling schemes as a continuous differentiable approximation to Bernoulli distribution and top-$k$ ranking without replacement. To address the second issue, we incorporate the copula function to model the dependency between features explicitly. In what follows, \textit{we drop the superscript $n$ for simplicity}.

\subsection{Sampling Scheme via Binary Mask}

When the number of features is unknown and sample-dependent, for each $i \in \{1,\ldots, d\}$, we define a binary-valued random variable $z_i$ which indicates whether the $i$-th feature should be included in $\mathcal S_a$.  Specifically, $z_i$ follows this distribution:
$P(z_i = 1) = \frac{\exp\{\alpha_i\}}{1 + \exp\{\alpha_i\}}$
and $p(z_i = 0) = \frac{1}{1 + \exp\{\alpha_i\}}$.
Here $\alpha_{i}$'s are obtained from weight layer.

By Gumbel-Max trick, we know that $z_i$ and $\hat z_i$ have the same distribution, where $\hat z_i = 1\{g_{i} + \alpha_i > 0\}$
and $g_i=\log\frac{u_i}{1-u_i}$ follow a standard logistic distribution. $u_i$'s follow the uniform distribution $[0,1]$, and each element is independent to the other during the generation procedure in the classical Gumbel-Max algorithm. Also, as we can see, $\hat z_i$ is an indicator function and hence is not differentiable with respect to $\alpha_i$.
To circumvent this issue, we consider to replace $\hat z_i$ by its soft counterpart, that is, 
$\tilde z_i = \frac{1}{1 + \exp\{- (g_i + \alpha_i) / t\}}$. 
Here $t$ is a tuning parameter. When $t \rightarrow 0$, it is not hard to see that $\tilde z_i \rightarrow \hat z_i$.

In addition to characterizing the marginal distribution, we need to take into account the dependence among different $z_i$'s.
It suffices to model the joint distribution of $g_i$'s.
To do so, we incorporate a copula to accommodate its dependence structure.
We write the joint C.D.F. of $(g_1, \ldots, g_d)$ as 
$G(x_1, \ldots, x_d) = P(g_1 \leq x_1, \ldots, g_d \leq x_d)$. 
We assume the following parameterized form,
$G(x_1, \ldots, x_d) = C_{gaussian}(L(x_1), \ldots, L(x_d); \boldsymbol R)$, 
where $C_{gaussian}$ is a Gaussian copula with $\boldsymbol R$ as its correlation matrix and $L(x) = \frac{1}{1 + \exp\{-x\}}$.
In a summary, $\boldsymbol{\tilde z} \sim G(x_1, \ldots, x_d)$ ($\boldsymbol{\tilde z} = (\tilde z_1, \ldots, \tilde z_d)$) is the output of the feature selection~layer.

A natural question is how to guarantee the correlation matrix to be positive definite.
We consider the following parameterization scheme. 
From the perspective of factor analysis~\citep{harman1976modern, kline2014easy}, 
suppose a random vector $\boldsymbol x = (x_1, \ldots, x_d)$ lies in a lower-dimensional manifold:
\begin{eqnarray*}
\boldsymbol x = \boldsymbol L \boldsymbol \xi + \boldsymbol \epsilon,
\end{eqnarray*}
where $\boldsymbol \xi$ is a latent random $p$-vector and $\boldsymbol L = (\boldsymbol l_1, \ldots, \boldsymbol l_p)$ is the coefficient/loading matrix with size being $d \times p$.
Suppose $\boldsymbol \xi$ and $\boldsymbol \epsilon$ are mutually uncorrelated, covariance matrix of $\boldsymbol \xi$ is an identity and the noise level of $\boldsymbol \epsilon$ is $\sigma^2$, then
we know 
\begin{eqnarray*}
\boldsymbol \Sigma = \boldsymbol L \boldsymbol L^T + \sigma^2 \boldsymbol I.
\label{eq:sigma_noise}
\end{eqnarray*}

In other words, we can parameterize the correlation matrix as 
$\boldsymbol R = \textrm{Norm}(\boldsymbol L \boldsymbol L^T + \sigma^2 \boldsymbol I)$,
where function $\textrm{Norm}(\cdot)$ maps a covariance matrix to a correlation matrix and satisfies
$(\textrm{Norm}(\boldsymbol \Sigma))_{ij} = \boldsymbol \Sigma_{ij}/ (\boldsymbol \Sigma_{ii} \boldsymbol \Sigma_{jj})^{1/2}$.
By adopting this parametrization, it's guaranteed that the correlation matrix is positive definite. Procedure to obtain correlated noise via Gaussian copula is summarized in Algorithm~\ref{algo:corr_noise}, algorithm flow for binary feature selection in Algorithm~\ref{algo:sample_binary_mask}. The proposed sampling scheme is first proposed in~\cite{wang2020relaxed} and our sampling scheme is identical to this.
\begin{algorithm}.  
\caption{Generate Correlated Uniform Noise via Gaussian Copula}
\label{algo:corr_noise}
\KwIn{Full-rank or low-rank matrix $\boldsymbol L$, identity matrix $\boldsymbol{I}$ and noise level $\sigma^2$.}
\KwOut{Correlated noise $\boldsymbol u$.}

Obtain the covariance matrix via low-rank approximation $\boldsymbol \Sigma =\boldsymbol L^T \boldsymbol L + \sigma^2 \boldsymbol I$ or full-rank approximation $\boldsymbol \Sigma =\boldsymbol {L}^T \boldsymbol L$ \;

Perform Cholesky factorization on $\boldsymbol \Sigma$ to get Cholesky factor $\boldsymbol V$\;

Generate a Gaussian noise vector $\boldsymbol \zeta$ from standard normal distribution $\boldsymbol{\zeta} \sim \mathcal{N}(0,\,\boldsymbol{I}_d )$\;

Get the correlation matrix $\boldsymbol R=\text{Norm}(\sigma^2 \boldsymbol I + \boldsymbol L \boldsymbol L^T)$\;

Calculate the Gaussian vector $\boldsymbol{q}=\boldsymbol V \boldsymbol{\zeta}$\;

Apply Gaussian copula to obtain $\boldsymbol u$ as ${u}_i =\Phi_{\boldsymbol R}(q_i), \forall i=1,\ldots,d $\;
\end{algorithm}

\begin{algorithm}
\caption{Sampling Scheme via Binary Mask.}
\label{algo:sample_binary_mask}
\KwIn{Feature vector $\boldsymbol{x} \in \mathbb{R}^{d}$ with its corresponding weight $\boldsymbol{\alpha} \in \mathbb{R}^{d}$, full-rank or low-rank matrix $\boldsymbol L$ and noise level $\sigma^2$, identity matrix $\boldsymbol{I}$, tuning parameter $t$, $\textrm{round}$ denotes the round operator.}
\KwOut{Binary mask vector $\boldsymbol{z} \in \{0,1\}^d$.}
Apply Algorithm~\ref{algo:corr_noise} to obtain correlated noise vector $\boldsymbol u$\;

Compute each element in the logit $\boldsymbol g$ as $g_i = \log \frac{u_i}{1-u_i}, \forall i=1,\ldots,d$\;

Calculate the probability $ \tilde z_i = \frac{1 }{1 + \exp \{- (g_i + \alpha_i) / t\}}, \forall i=1,\ldots,d $\;

Obtain indicator vector $\boldsymbol z$ via discretizing the probability to binary variable $z_i = \textrm{round} (\tilde z_i), \forall i=1,\ldots,d$\;
\end{algorithm}

\subsection{Sampling Scheme via Top-$k$ Ranking}

When the number of features is known, without loss of generality, we let $k := |\mathcal S_a|$ ($k \leq d$). 
We aim to find the top-$k$ features with the most predictive power.
Suppose we have obtained the weights $\{\alpha_{1}, \alpha_{2}, \ldots \alpha_{d} \}$ from weight layer.
A straightforward way to select the $k$ features randomly proportional to its weights, which is also known as the weighted random sampling (WRS)~\citep{xie2019reparameterizable}. This procedure can be realized through the following ways: (i) For each $i \in [d]$, sample $u_i \sim U(0,1)$ independently and compute keys $v_i = u_i^{1 / \alpha_i}$.
(ii) Select $k$ features with the largest keys $v_i$.

Firstly, we can observe that 
$u_i$ are sampled independently in above procedure. We extend this by adding correlations between $u_i$'s. That is, their joint distribution satisfies
\begin{eqnarray}\label{eq:topk:gaussian}
F(u_1, \ldots, u_d) = C_{gaussian}(u_1, \ldots, u_d; \boldsymbol R),
\end{eqnarray}
where correlation matrix $\boldsymbol R$ should be learned through the network. Here, we assume $\boldsymbol R$ admits the structure $\textrm{Norm}(\boldsymbol I + \tau \boldsymbol L \boldsymbol L^T)$ with $\tau$ as a hyper parameter controlling the magnitude of correlations. \textit{Observe to ourselves that certain equivalence of $\tau$ here and $\sigma$ in~\eqref{eq:sigma_noise} by giving $\tau=\frac{1}{\sigma^2}$}.

\begin{figure*}
\centering
   \includegraphics[width=5in]{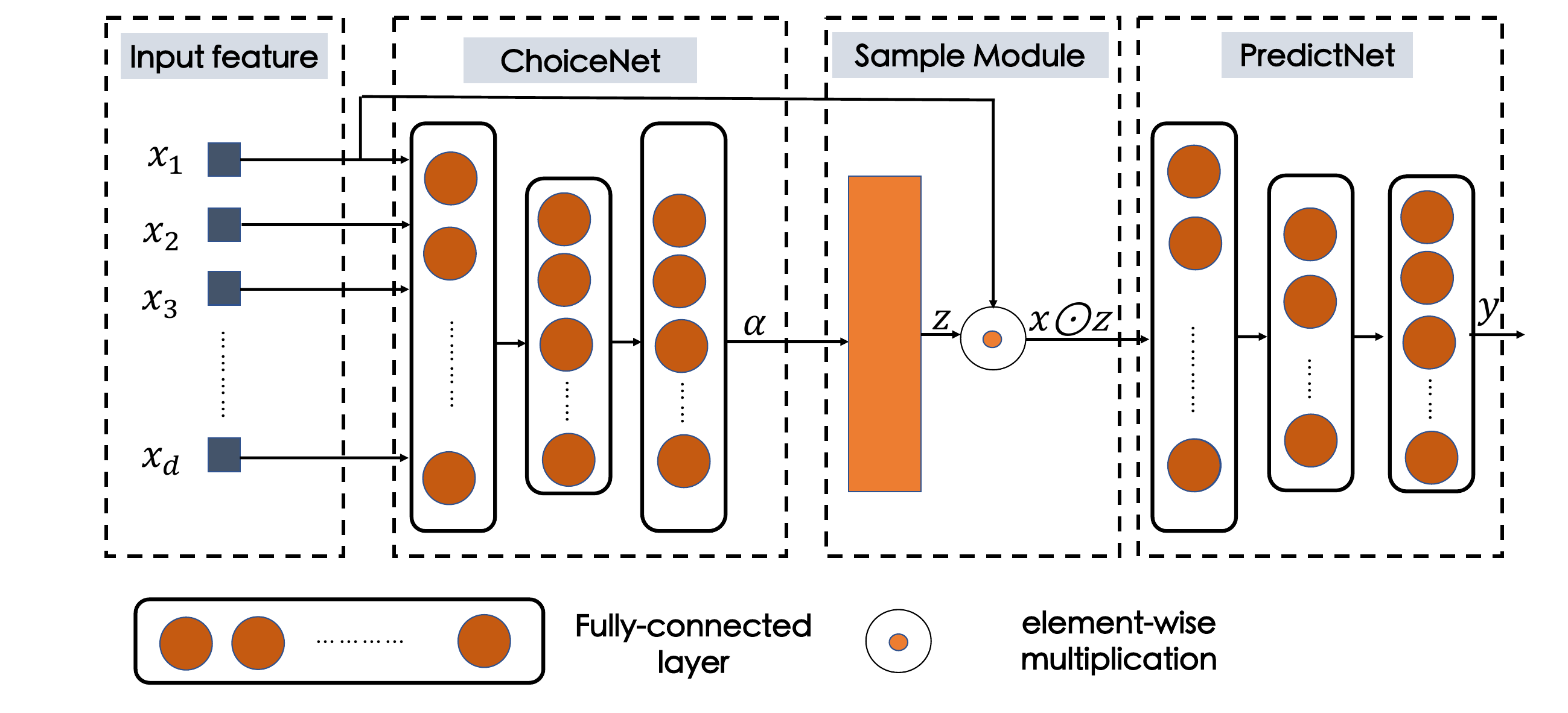}

   \caption{Our feature selection framework, it contains three main parts: ChoiceNet receives the input feature $x_1 \ldots x_d$ and determines the score $\boldsymbol  \alpha$, sampler module then receives $\boldsymbol\alpha$ and outputs the indicator variable $\boldsymbol z$; at last, the suppressed feature  $\boldsymbol x \odot \boldsymbol z$ is fed into PredictNet for prediction. }
   \label{fig:framework}

 \end{figure*}
 
Secondly, suppose the top-$k$ active feature set based on keys $\{v_1, \ldots, v_d\}$ is $\mathcal S_a = \{i_1, \ldots, i_k\}$.
We then define the indicator $z_i = \boldsymbol 1\{i \in \mathcal S_a\}$ to represent whether feature $i$ is selected or not.
By such construction, we know that $z_i$ is not a differentiable function of $\alpha$'s. This may bring difficulty in computing the backward gradients. 
To avoid this problem, we consider a continuous deterministic relaxation to approximate $z_i$'s.
The specific procedure is described as follows.
We define 
\begin{eqnarray}
p_i^s = \frac{\exp\{v_i^s/t\}}{\sum_{l=1}^d \exp\{v_l^s/t\}} \nonumber
\end{eqnarray}
for $s \in \{1, \ldots, k\}$, where $v_j^s$ is defined recursively by 
\begin{eqnarray}
v_i^{s} &=& v_i^{s-1} + t^{\delta} \log(1 - p_i^{s-1})  \quad \textrm{for} ~ s \in \{2, \ldots, k\}; \nonumber \\
v_i^s &=& v_i  \quad \textrm{for} ~ s = 1. \nonumber
\end{eqnarray}
Here $t$ is a tuning parameter that determines the approximation level and $\delta \in [0,1)$ is a hyper-parameter that adjusts the step size.
We then define the relaxed value of $z_i$ which is $\tilde z_i := \sum_{s = 1}^k p_i^s$
and let $\boldsymbol{\tilde z} = (\tilde z_1, \ldots, \tilde z_d), \boldsymbol{z} = (z_1, \ldots, z_d)$.
By writing into the vector form, we have 
\begin{eqnarray}
\boldsymbol{\tilde z} = \sum_{s=1}^k  \boldsymbol p^s, \nonumber
\end{eqnarray}
with $\boldsymbol p^s = (p_1^s, \ldots, p_d^s)$.
Via using this relaxation scheme, our method contains the classical weighted random sampling as special case. The algorithm workflow for top-$k$ ranking method is listed in Algorithm~\ref{algo:sample_topk_ranking}. On top of the above, we offer the following two theorems to shed light on role of $\sigma$ in the convergence of the sampling scheme, with the proofs deferred to the Appendix. 

 \begin{algorithm}[t]
\caption{Sampling Scheme via Top-$k$ Ranking.}
\label{algo:sample_topk_ranking}
\KwIn{Feature vector $\boldsymbol{x} \in \mathbb{R}^{d}$ with its corresponding weight $\boldsymbol{\alpha} \in \mathbb{R}^{d}$, full-rank or low-rank matrix $\boldsymbol L$ and noise level $\sigma^2$, identity matrix $\boldsymbol{I}$, tuning parameter $t,\delta$. }
\KwOut{Binary mask vector $\boldsymbol{z} \in \{0,1\}^d$.}
Apply Algorithm~\ref{algo:corr_noise} to obtain correlated noise vector $\boldsymbol u$\;

Compute keys $v_i = u_i^{1 / \alpha_i},$ $\forall i=2,\ldots,d$,$v_1 = u_1$;

Take the log transformation $\log(\cdot)$ to $v_i$, and rewrite the key $ v_i=(1 / \alpha_i) \log(u_i)$\;

\For{$s\gets 2$ \KwTo $k$ }{
   ${v}_i^{s} = {v}_i^{s-1} + t^{\delta} \log(1 - p_i^{s-1}), \forall i=1,\ldots,d$ \;
   
   $p_i^s = \frac{\exp\{{v}_i^s/t\}}{\sum_{l=1}^d \exp\{{v}_l^s/t\}}, \forall i=1,\ldots,d$\
    }
$\boldsymbol{\tilde z} = \sum_{s=1}^k  \boldsymbol p^s$\;

Obtain feature indicator vector $\boldsymbol z$ via masking the top-$k$ largest elements in $\boldsymbol {\tilde z}$ with 1, other elements with 0 \;
\end{algorithm}

\begin{theorem}
\label{theo:1}
As both $t, \frac{1}{\sigma} \rightarrow 0$, we have $\textrm{Trunc}(\boldsymbol{\tilde z},k) \rightsquigarrow \boldsymbol{z}^{wrs}$.
\end{theorem}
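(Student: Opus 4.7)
The plan is to decouple the two limits and handle them in sequence, since each controls a different part of the construction: $\sigma \to \infty$ (i.e.\ $1/\sigma \to 0$) governs how close the correlated noise is to independent Uniform$[0,1]$, and $t \to 0$ governs how close the soft recursive softmax is to the hard top-$k$ argmax. First I would note that $\boldsymbol R = \textrm{Norm}(\boldsymbol I + \tau \boldsymbol L \boldsymbol L^T)$ with $\tau = 1/\sigma^2$, so $\boldsymbol R \to \boldsymbol I$ continuously as $\sigma \to \infty$. By continuity of the Gaussian cdf in its covariance, the Gaussian copula $C_{gaussian}(\,\cdot\,;\boldsymbol R)$ converges pointwise to the independence copula $\prod_i u_i$, and hence the correlated noise vector $(u_1,\ldots,u_d)$ produced by Algorithm~\ref{algo:corr_noise} converges in distribution to $d$ i.i.d.\ $\mathrm{Uniform}[0,1]$ variables. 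Given i.i.d.\ uniform noise, $v_i = u_i^{1/\alpha_i}$ is by construction the key used in weighted random sampling, so the hard top-$k$ of $\{v_i\}$ is exactly $\boldsymbol z^{wrs}$.

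Second, I would fix the noise realization and analyze the relaxation as $t \to 0$ by induction on the step index $s$. For $s=1$, a standard low-temperature softmax argument gives $p_i^1 \to \mathbf{1}\{i = \arg\max_j v_j\}$ whenever the $v_j$ are distinct (which holds almost surely under any continuous noise distribution). For the inductive step, the update $v_i^{s+1} = v_i^s + t^{\delta}\log(1 - p_i^s)$ satisfies $v_{i_s}^{s+1} \to -\infty$ for the index $i_s$ just selected (since $p_{i_s}^s \to 1$ and $t^\delta \log(\cdot)$ still diverges after multiplication), while for every other coordinate $t^\delta \log(1-p_i^s) \to 0$ because $p_i^s \to 0$ at a rate controlled by the softmax gap and $\delta < 1$. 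Thus the next softmax concentrates on the next-largest remaining $v_j$, and iterating yields $\tilde z_i \to \sum_{s=1}^k \mathbf{1}\{i = i_s\}$, the indicator of the top-$k$ keys. Since this limit already lies in $\{0,1\}^d$ with exactly $k$ ones, $\textrm{Trunc}(\boldsymbol{\tilde z}, k)$ converges to the same indicator.

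Third, to splice the two limits together I would invoke the continuous mapping theorem. For every $t > 0$, the map from the noise vector $\boldsymbol u$ to $\textrm{Trunc}(\boldsymbol{\tilde z}, k)$ is a measurable (indeed continuous off a null set of tie configurations) function of $\boldsymbol u$. Letting $t \to 0$ first for each fixed $\sigma$ gives convergence to the hard top-$k$ of the correlated-$u$ keys, and then letting $\sigma \to \infty$ transfers the distribution of that hard top-$k$ to the independent-$u$ case, which is $\boldsymbol z^{wrs}$. Equivalently, a Skorokhod-representation argument places everything on a common probability space where the noise converges a.s.\ and the relaxation converges a.s., yielding a.s.\ convergence of $\textrm{Trunc}(\boldsymbol{\tilde z}, k)$ and hence weak convergence.

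The main obstacle I anticipate is controlling the recursion uniformly as $t \to 0$: one must check that the perturbation $t^\delta \log(1 - p_i^s)$ is simultaneously large enough to remove the already-selected index (forcing its future $p$-weight to $0$) and small enough not to disturb the ranking of the remaining keys. This is exactly where the assumption $\delta \in [0,1)$ enters, since it forces $t^\delta$ to dominate any polynomial remainder in $p_i^s$ while still vanishing. A careful asymptotic expansion of $\log(1-p_i^s)$ using the standard low-temperature softmax tail $p_i^s \asymp \exp(-(\Delta_s)/t)$, where $\Delta_s$ is the gap between the $s$-th and $(s+1)$-th largest remaining key, should make this argument rigorous; the rest follows by continuity of the Gaussian copula in $\boldsymbol R$.
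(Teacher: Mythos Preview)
Your proposal is correct and follows essentially the same two-step decomposition as the paper's own proof: first show that the temperature relaxation recovers the hard top-$k$ indices of the keys as $t \to 0$, then show that as $\boldsymbol R \to \boldsymbol I$ (i.e.\ $1/\sigma \to 0$) the ordering of the keys has the weighted-random-sampling law (the paper does this by citing Proposition~3 of \citet{efraimidis2006weighted}). Your inductive analysis of the recursion and the explicit role of $\delta \in [0,1)$ are in fact more detailed than the paper, which simply asserts that the analogue of your step two ``can be directly verified,'' and your use of the continuous mapping / Skorokhod argument makes the splicing of the two limits more explicit than the paper's one-line ``combining.''
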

Here, ``$\rightsquigarrow$" represents the convergence in distribution. Function $\textrm{Trunc}(\boldsymbol x,k)$ chooses the indices of largest $k$ elements in $\boldsymbol x$ provided the length of $\boldsymbol{x}$ is at least $k$.
When $1/\sigma \rightarrow 0$, the proposed $\boldsymbol{\tilde z}$ converges to $\boldsymbol{z}^{wrs}$ which follows the weighted sampling distribution.
On the other hand, if we take $\boldsymbol L = (1, \ldots, 1)^T \in \mathbb R^d$ and let $\sigma \rightarrow 0$, then the proposed method can also recover the situation top $k$ sampling based on weights $\{\alpha_1, \ldots, \alpha_d\}$.
\begin{proof}[Proof of Theorem~\ref{theo:1}]
Let $\check{\boldsymbol z}$ be $\textrm{Trunc}(\tilde{\boldsymbol z}, k)$ and it suffices to show that 
\begin{eqnarray}\label{eq:0}
&\lim_{t,\frac{1}{\sigma} \rightarrow 0} \mathbb P(\check{\boldsymbol z} = (i_1, \ldots, i_k)) \\\notag
=& \frac{\alpha_{i_1}}{\sum_{i=1}^d \alpha_i} \cdot \cdot \ldots \cdot \frac{\alpha_{i_k}}{\sum_{i=1}^d \alpha_i - \sum_{j=1}^{k-1}\alpha_{i_j}}.
\end{eqnarray}
By recursive formula of $\boldsymbol p^s$, it can directly verified that for any $\epsilon > 0$, there exists a constant $t_0$ such that
\begin{eqnarray}\label{eq:1}
\tilde z_{i} \geq 1 - \epsilon & & \textrm{for}~ i \in \mathcal S_{v,a}; \nonumber \\
\tilde z_{i} \leq \epsilon & & \textrm{for}~ i \notin \mathcal S_{v,a}
\end{eqnarray}
holds when $t < t_0$. Here, $\mathcal S_{v,a}$ is defined to be the set of indices of $k$ largest keys $v_i$. (Here $v_i :=\log(u_i)/\alpha_i$ is called as the key for $i$-th feature.)
In other words, $\textrm{Trunc}(\tilde{\boldsymbol z}, k)$ returns the indices of top $k$ keys when $t \downarrow 0$. 

On the other hand, when $1 / \sigma \rightarrow 0$, we know that the correlation matrix $\boldsymbol R$ converges to $\boldsymbol I$.
By Fubini's Theorem and following the proof strategy of Proposition 3 in~\citet{efraimidis2006weighted}, we know that 
\begin{eqnarray}\label{eq:2}
\lim_{\sigma \rightarrow 0} \mathbb P(v_1 \leq \ldots \leq v_d) 
= \prod_{i=1}^d \frac{\alpha_i}{\alpha_1 + \ldots + \alpha_i} 
\end{eqnarray}

Combining \eqref{eq:1} and \eqref{eq:2}, it gives exactly \eqref{eq:0}. This completes the proof.
\end{proof}

\begin{theorem}
\label{theo:2}
When $\boldsymbol L = (1, \ldots, 1)^T \in \mathbb R^d$, we have $\textrm{Trunc}(\boldsymbol{\tilde z},k) \rightsquigarrow \boldsymbol{z}^{top k}$ as both $t, \sigma \rightarrow 0 \ (\text  {the same as} ~ \tau \to \infty)$,
where $\boldsymbol{z}^{top k} := (z_{i_1},\ldots, z_{i_k})$.
\end{theorem}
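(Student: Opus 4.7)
The plan is to reduce Theorem~\ref{theo:2} to the $t\downarrow 0$ argument already developed in the proof of Theorem~\ref{theo:1}, once we understand what the degenerate choice $\boldsymbol L=(1,\ldots,1)^T$ together with $\sigma\downarrow 0$ does to the Gaussian copula. Concretely, plugging this $\boldsymbol L$ into $\boldsymbol\Sigma=\sigma^2\boldsymbol I+\boldsymbol L\boldsymbol L^T$ yields $\boldsymbol\Sigma_{ii}=1+\sigma^2$ and $\boldsymbol\Sigma_{ij}=1$ for $i\neq j$, so $\boldsymbol R=\mathrm{Norm}(\boldsymbol\Sigma)$ has ones on the diagonal and $1/(1+\sigma^2)$ off-diagonal. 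As $\sigma\downarrow 0$ (equivalently $\tau\uparrow\infty$), $\boldsymbol R\to \boldsymbol 1\boldsymbol 1^T$, so the Gaussian vector $\boldsymbol q$ in Algorithm~\ref{algo:corr_noise} becomes perfectly correlated and satisfies $q_1=q_2=\cdots=q_d$ almost surely in the limit. Applying the marginal $\Phi(\cdot)$ then gives a single common uniform $u:=u_1=\cdots=u_d$.

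Next, I would trace this degeneracy through the key construction. With a common $u\sim U(0,1)$, the keys become $v_i=u^{1/\alpha_i}$, or equivalently $\log v_i=(\log u)/\alpha_i$. Since $\log u<0$ almost surely, the ordering of $\{v_i\}$ is deterministic and coincides with the ordering of $\{\alpha_i\}$: $v_i>v_j\iff \alpha_i>\alpha_j$. Hence the set $\mathcal S_{v,a}$ of top-$k$ keys, which drives the relaxation $\tilde{\boldsymbol z}$, is almost surely equal to the set of indices of the top-$k$ weights. This deterministic matching is the content that distinguishes Theorem~\ref{theo:2} from Theorem~\ref{theo:1}: here the randomness in the selection collapses, whereas in the $1/\sigma\to 0$ regime of Theorem~\ref{theo:1} it obeys a genuine weighted sampling law.

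Finally, I would invoke the $t\downarrow 0$ portion of the previous proof verbatim. Equation~\eqref{eq:1} gave, uniformly in the randomness, that for any $\epsilon>0$ there is $t_0$ such that $\tilde z_i\geq 1-\epsilon$ for $i\in\mathcal S_{v,a}$ and $\tilde z_i\leq\epsilon$ for $i\notin\mathcal S_{v,a}$ whenever $t<t_0$. Therefore $\mathrm{Trunc}(\tilde{\boldsymbol z},k)$ returns exactly the indices in $\mathcal S_{v,a}$ when $t$ is small. Combining this with the previous paragraph, in the joint limit $t,\sigma\downarrow 0$ the output $\mathrm{Trunc}(\tilde{\boldsymbol z},k)$ converges in distribution (in fact, in probability, since the limit is deterministic) to the indicator $\boldsymbol z^{topk}$ of the $k$ largest $\alpha_i$'s.

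The main obstacle will be carefully handling the joint limit in $(t,\sigma)$ rather than an iterated limit, and ensuring the convergence-in-distribution statement is clean on the event where ordering is unambiguous. I would assume the $\alpha_i$'s are distinct (the tied case being a null set that does not affect weak convergence), decouple the two sources of asymptotics by first fixing a typical realization of $u$ under $\boldsymbol R_\sigma\to\boldsymbol 1\boldsymbol 1^T$ via a standard continuous-mapping/Slutsky argument on the Gaussian marginals, and then apply the uniform-in-randomness estimate~\eqref{eq:1} to pass $t\downarrow 0$. The rest is bookkeeping.
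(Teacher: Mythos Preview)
Your proposal is correct and follows essentially the same route as the paper's proof: both reuse the $t\downarrow 0$ step from Theorem~\ref{theo:1} to identify $\mathrm{Trunc}(\tilde{\boldsymbol z},k)$ with the top-$k$ keys, and then argue that as $\sigma\downarrow 0$ the keys inherit the ordering of the $\alpha_i$'s. The paper phrases the second step as a pairwise probability computation, showing $\mathbb P(v_i\leq v_j)=\mathbb P(\log u_i/\log u_j>\alpha_i/\alpha_j)\to 0$ whenever $\alpha_i>\alpha_j$, whereas you make the underlying mechanism explicit by observing that $\boldsymbol R\to\boldsymbol 1\boldsymbol 1^T$ forces $u_1=\cdots=u_d$ in the limit so that the ratio is identically $1$; these are the same argument viewed at slightly different levels of detail, and your extra care with the joint limit and ties is a welcome addition rather than a departure.
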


 \begin{proof}[Proof of Theorem~\ref{theo:2}]
Similar to the proof of Theorem~\ref{theo:1}, we again know that $\textrm{Trunc}(\tilde{\boldsymbol z}, k)$ returns the indices of top $k$ keys when $t \downarrow 0$. 
It remains to show that the order of $v_i$'s is the same as the order of $\alpha_i$'s when $1/\sigma\rightarrow 0$.

We only need to show that the probability that, for any pair $i,j$, it holds
\begin{eqnarray}
\lim_{\sigma \rightarrow 0} \mathbb P(v_i \leq v_j) = 0 
\end{eqnarray}
if $\alpha_i > \alpha_j$. By straightforward calculation, we get
\begin{align*}
\mathbb P(v_i \leq v_j)  = \mathbb P(u_i^{1/\alpha_i} \leq u_j^{1/\alpha_j}) \
 = \mathbb P(  \frac{\log u_i}{\log u_j} > \frac{\alpha_i}{\alpha_j} )
\end{align*}
The right hand side goes to 0 as $\sigma \rightarrow 0$. This leads to the desired result.
\end{proof}

\subsection{The Overall Architecture}

We now turn to the detailed implementation via neural networks, which merges the deep neural networks and the proposed framework of feature selection. 

As illustrated in Figure~\ref{fig:framework}, our architecture is composed of three parts: (i) \textbf{ChoiceNet} (abbreviation of Choice Network), which models the mapping $\boldsymbol \alpha=f\left(\boldsymbol{w};\boldsymbol{x}\right)$ and is responsible for selecting features and outputting the learned score $\boldsymbol \alpha $ for each sample. (ii) \textbf{Sampler Module}, which models the sampling function $\mathcal{S}(\boldsymbol \alpha)$ and receives the input $(\boldsymbol \alpha)$ to output $\boldsymbol{z}$, where ${z}_{i}=1$ indicates that the $i$-th feature is preserved or otherwise removed, and the selected feature can be expressed as $\boldsymbol{x} \odot \boldsymbol{z}$. (iii) \textbf{PredictNet} (abbreviation of Predictor Network), which receives the selected feature $\boldsymbol{x} \odot \boldsymbol{z}$ as input and outputs the corresponding prediction. For further details on the practical applications of this architecture in terms of neural networks, please refer to the Appendix.


\subsubsection{Architecture of ChoiceNet and PredictNet}

Both ChoiceNet and CriticNet are three-layer MLPs with ReLU~\citep{brownlee2019gentle} or SeLU~\citep{klambauer2017self} activations. For ChoiceNet, the size of the input layer is $d$, a fully connected layer with the dimension of $h_c$ is added after the input layer, and the dimension of the final layer is also equivalent to $d$. PredictNet differs from ChoiceNet in that the number of units in the last layer is $k$, where $k$ is the number of labels for classification tasks, and $h_p$ is the dimension of the hidden layer. Additionally, a Softmax layer is added to the last layer for prediction, and Batch Normalization is added after the activation function to reduce overfitting.

  \begin{figure*}

  \begin{center}
    \includegraphics[width=5in]{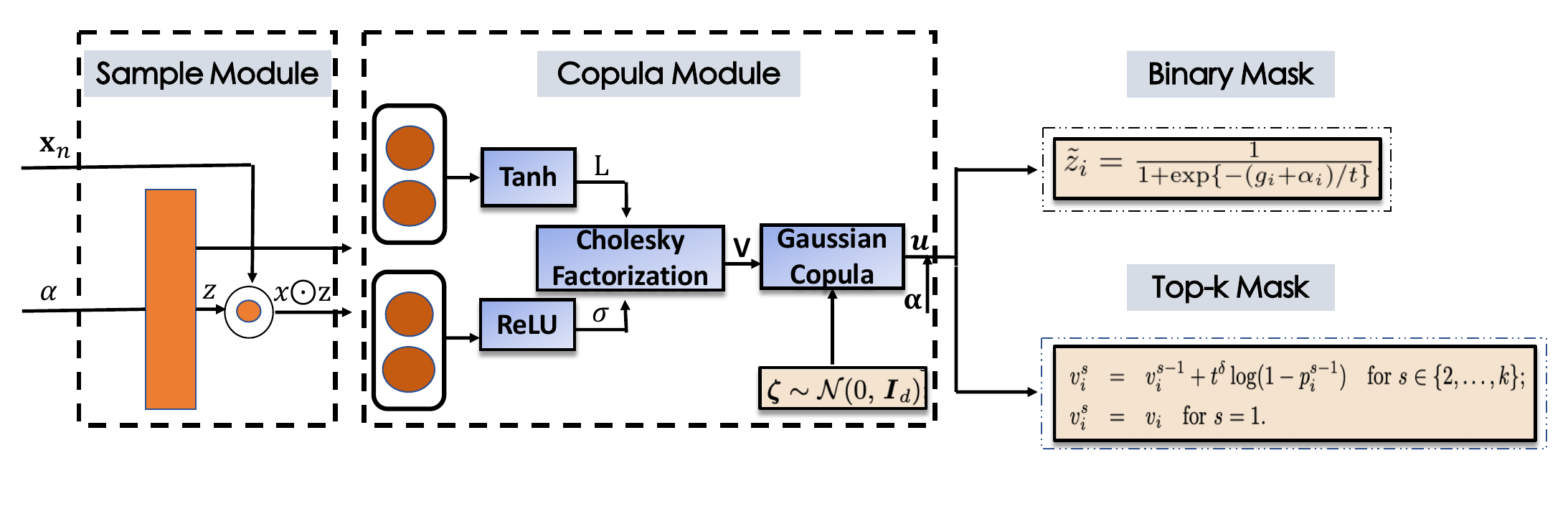}
  \end{center}

\caption{Sampler module $ \mathcal{S}(\boldsymbol \alpha)$. We make correlated uniform noise via Gaussian copula as Algorithm~\ref{algo:corr_noise}, then we apply the two sampling schemes as stated in Algorithm~\ref{algo:sample_binary_mask} and Algorithm~\ref{algo:sample_topk_ranking}.} 
\label{fig:sample_module}

\end{figure*}

\subsubsection{Architecture of Sampler Module}

\paragraph{Architecture of Binary Sampler Module}It is not straightforward to implement a copula using a neural network, as the covariance matrix ${\boldsymbol \Sigma}$ must be guaranteed to be positive semi-definite (PSD). To this end, two extra fully connected layers are included to infer $\boldsymbol L$ and $\sigma$, with tanh and relu activation functions applied after the fully connected layer. After obtaining the covariance function, Cholesky decomposition can be used to obtain the factor matrix $\boldsymbol V$, ensuring that the covariance matrix is PSD. During the inference stage, a Bernoulli distribution is sampled for each sample. Algorithm~\ref{algo:sample_binary_mask} can be leveraged to generate the feature indicator vector, as illustrated in Figure~\ref{fig:sample_module}.

\paragraph{Architecture of Top-$k$ Sampler Module} WRS generates the key $v_i$ for the associated feature $x_i$ as $v_i = u_i^{1/\alpha_i}$. By taking the log transformation, the key can be written as $\tilde{v}_i = \left(1/\alpha_i\right)\log(u_i)$, since the log transformation is monotonic, the induced construction of $\tilde{v}_i$ is still consistent with the original key $v_i$. For concrete realization, we propose to reparameterize the key as a deterministic mapping of the parameters $\left(1/\alpha_i\right)$ and $\log(u_i)$ via a neural network. Then, by leveraging the top-$k$ relaxation in Algorithm~\ref{algo:sample_topk_ranking}, we can build a differentiable approximation with respect to the key $v_i$. The architecture of the top-$k$ sampling scheme in the context of neural networks is similar to that of binary feature selection, except for the block representation. We also provide a demonstration of the block in Figure~\ref{fig:sample_module}.

\section{Experimental Results}

\label{sec:experiments}
In this section, we empirically compare our algorithm with the advanced feature selection methods. For binary feature selection, we quantitatively compare our work with several powerful algorithms including methods based on neural networks like INVASE~\citep{yoon2019invase}, L2X~\citep{chen2018learning}, LIME~\citep{lime}, Shap~\citep{lundberg2017unified}, Knockoff~\citep{barber2015controlling}, and classical methods like Tree~\citep{geurts2006extremely}, SCFS~\citep{hall2000correlation}, LASSO~\citep{tibshirani1996regression}.  For top-$k$ feature selection, we also validate our algorithm with several dominant approaches such as STG~\citep{yamada2020feature}, CAE~\citep{abid2019concrete}, L2X~\citep{chen2018learning}, Shap~\citep{lundberg2017unified}, LASSO~\citep{tibshirani1996regression}, RF~\citep{diaz2006gene}, boosting~\citep{friedman2000additive}. 
In the comparison of classical methods, the advantages of neural network-based methods in an end-to-end fashion are highlighted. \textit{It should be noted that, although these approaches vary in type, they all serve the purpose of feature selection, which is also widely employed in the preceding methods}. See Appendix for the details of implementation and baseline methods.

\begin{table*}[ht]

  \centering
  \caption{Experimental results on six synthetic dataset with dimension 11, we compare with a number of classical and advanced algorithms. Better results are marked in bold. Larger TPR and smaller FDR indicate better results. All the results are extracted from the INVASE method except ours.}

  \resizebox{1.0\linewidth}{!}{
    \begin{tabular}{@{}l|ll|ll|ll|ll|ll|ll@{}}
    \toprule
    Dataset & \multicolumn{2}{c|}{Syn1} & \multicolumn{2}{c|}{Syn2} & \multicolumn{2}{c|}{Syn3} & \multicolumn{2}{c|}{Syn4} & \multicolumn{2}{c|}{Syn5} & \multicolumn{2}{c}{Syn6} \\
    \hline
    Metric & TPR $\uparrow $ & FDR $\downarrow$  & TPR  $\uparrow $  & FDR $\downarrow$  & TPR  $\uparrow $  & FDR $\downarrow$  & TPR $\uparrow $   & FDR   $\downarrow$ & TPR $\uparrow $   & FDR $\downarrow$  & TPR  $\uparrow $   & FDR$\downarrow$ \\
    \rowcolor{lightgray!60}
    Ours  & \textbf{100.0 } & \textbf{0.0 } & 86.4  & 4.8 &96.8  & {2.0 } & { 95.5} & \textbf{2.0} & \textbf{89.3 } & {3.78 } & \textbf{93.8 } & \textbf{6.6 } \\
    
    INVASE~\citep{yoon2019invase} & \textbf{100.0}  &\textbf{0.0}   & \textbf{100.0}  & \textbf{0.0}   & 92.0  & \textbf{0.0}   & \textbf{99.8}  & 10.3  & 84.8  & \textbf{1.1}   & 90.1  & 7.4  \\
    
    L2X ~\citep{chen2018learning} & \textbf{100.0}  & \textbf{0.0}   & \textbf{100.0}  & \textbf{0.0}   & 69.4  & 30.6  & 79.5  & 21.8  & 74.8  & 26.3  & 83.3  & 16.7  \\
    
    Shap~\citep{lundberg2017unified} & 60.4  & 39.6  & 93.3  & 6.7   & 90.9  & 9.1   & 65.2  & 31.9  & 62.9  & 33.7  & 71.2  & 28.8  \\
   
    LIME~\citep{lime}  & 13.8  & 86.2  & \textbf{100.0}  & \textbf{0.0}    & 98.1  & 1.9   & 40.7  & 49.4  & 41.1  & 50.6  & 50.5  & 49.5  \\
   
    Knockoff~\citep{barber2015controlling} & 10.0  & 70.0  & 8.7   & 36.2  & 81.2  & 17.5  & 38.8  & 35.1  & 41.0  & 51.1  & 56.6  & 42.1  \\
 
    Tree~\citep{geurts2006extremely}  & \textbf{100.0}  & \textbf{0.0}  & \textbf{100.0}  & \textbf{0.0}   & \textbf{100.0} & \textbf{0.0}  & 54.7  & 39.0  & 56.8  & 37.5  & 60.0  & 40.0  \\
 
    LASSO~\citep{tibshirani1996regression} & 19.0  & 81.0  & 39.8  & 60.2  & 78.3  & 21.7  & 49.9  & 50.9  & 45.5  & 48.2  & 56.4  & 43.6  \\
  
    SCFS~\citep{hall2000correlation}  & 23.5  & 76.5  & 39.5  & 60.5  & 78.3  & 22.0  & 48.9  & 52.4  & 42.4  & 51.2  & 56.1  & 43.9  \\
    \bottomrule
    \end{tabular}%
    }
  \label{tab:sythetic_11d}

\end{table*}%

\begin{table*}[htbp]
  \centering
  \caption{Experimental results on 100-dimension synthetic datasets. 
  Better results are marked in bold. Larger TPR and smaller FDR indicate better results. All the results are extracted from the INVASE method except ours.}

  \resizebox{1.0\linewidth}{!}{
    \begin{tabular}{@{}l|ll|ll|ll|ll|ll|ll@{}}
    \toprule
    Dataset & \multicolumn{2}{c|}{Syn1} & \multicolumn{2}{c|}{Syn2} & \multicolumn{2}{c|}{Syn3} & \multicolumn{2}{c|}{Syn4} & \multicolumn{2}{c|}{Syn5} & \multicolumn{2}{c}{Syn6} \\
    \hline
    Metric & TPR  $\uparrow $ & FDR$\downarrow$   & TPR$\uparrow $   & FDR $\downarrow$   & TPR$\uparrow $   & FDR $\downarrow$   & TPR $\uparrow $  & FDR$\downarrow$    & TPR $\uparrow $  & FDR  $\downarrow$  & TPR  $\uparrow $ & FDR $\downarrow$\\
     \rowcolor{lightgray!60}
    Ours  & \textbf{100.0 } & \textbf{0.0 } & \textbf{100.0 } & \textbf{0.0 } & \textbf{100.0} & {1.2}   & \textbf{94.5 } & \textbf{40.3}  & \textbf{88.4}  & \textbf{9.4}  & \textbf{99.5 } & \textbf{14.6 } \\

    INVASE~\citep{yoon2019invase} &  \textbf{100.0 } & \textbf{0.0 } & \textbf{100.0 } & \textbf{0.0 }  & \textbf{100.0 } & \textbf{0.0}   & 66.3  & 40.5      & 73.2  & 23.7  & 90.5  & 15.4  \\
 
    L2X~\citep{chen2018learning}   & 6.1   & 93.9  & 81.4  & 18.6  & 57.7  & 42.3  & 48.5  & 46.5  & 35.4  & 60.8  & 66.3  & 33.7  \\
   
    Shap~\citep{lundberg2017unified}  & 4.4   & 85.6  & 95.1  & 4.9   & 88.8  & 11.2  & 50.2  & 43.4  & 49.9  & 44.2  & 62.5  & 37.5  \\
   
    LIME~\citep{lime}  & 0.0   & 100.0  &  \textbf{100.0 }  &  \textbf{0.0 }   & 92.7  & 7.3   & 43.8  & 47.4  & 49.9  & 44.2  & 50.1  & 49.9  \\
   
    Knockoff~\citep{barber2015controlling} & 0.0   & 64.9  & 3.7   & 71.2  & 74.9  & 24.9  & 28.2  & 59.8  & 33.1  & 59.4  & 46.9  & 53.0  \\
   
    Tree~\citep{geurts2006extremely}  & 49.9  & 50.1  & \textbf{100.0 }  &  \textbf{0.0 }   & \textbf{100.0 }  &  \textbf{0.0 }   & 40.7  & 49.5  & 56.7  & 37.5  & 58.4  & 41.6  \\
 
    LASSO~\citep{tibshirani1996regression} & 2.5   & 97.5  & 4.0   & 96.0  & 75.3  & 24.7  & 28.3  & 73.2  & 36.0  & 56.9  & 45.9  & 54.1  \\
  
    SCFS~\citep{hall2000correlation}  & 2.5   & 97.5  & 5.3   & 94.7  & 74.9  & 25.1  & 27.0  & 74.6  & 30.6  & 62.1  & 38.3  & 61.7  \\
    \bottomrule
    \end{tabular}%
    }
  \label{tab:sythetic_100d}%

\end{table*}%

\begin{table*}[h!]
  \centering
  \caption{Experimental results on 100-dimension synthetic datasets with correlated features. Better results are marked in bold. Larger TPR and smaller FDR indicate better results.}

  \resizebox{1.0\linewidth}{!}{
    \begin{tabular}{@{}l|ll|ll|ll|ll|ll|ll@{}}
    \toprule
    Dataset & \multicolumn{2}{c|}{Syn1} & \multicolumn{2}{c|}{Syn2} & \multicolumn{2}{c|}{Syn3} & \multicolumn{2}{c|}{Syn4} & \multicolumn{2}{c|}{Syn5} & \multicolumn{2}{c}{Syn6} \\
\hline
    Metric & TPR $\uparrow $  & FDR$\downarrow$   & TPR$\uparrow $   & FDR$\downarrow$   & TPR $\uparrow $  & FDR$\downarrow$   & TPR $\uparrow $  & FDR$\downarrow$   & TPR $\uparrow $  & FDR$\downarrow$   & TPR $\uparrow $  & FDR$\downarrow$ \\
  \rowcolor{lightgray!60}
    Ours  & \textbf{100.0 } & \textbf{0.0 } & \textbf{100.0 } & \textbf{0.0 } & \textbf{100.0 } & \textbf{0.0 } & \textbf{91.95 } & \textbf{41.6 } & \textbf{90.9 } & \textbf{42.0 } & \textbf{90.1} & \textbf{43.9 } \\

    NOLA  & 74.8  & 18.3  & 96.4  & 16.7  & 99.7  & 78.4  & 48.5  & 84.0  & 66.8  & 49.3  & 50.0  & 46.2  \\

    INVASE~\citep{yoon2019invase} & 69.3  & 27.5  & 93.6  & 14.5  & 91.8  & 72.5  & 62.5  & 87.3  & 46.2  & 53.8  & 37.9  & 41.7  \\
    \bottomrule
    \end{tabular}%
    }
  \label{tab:sythetic_100d_correlation}%
\end{table*}%

We begin our experiments with six challenging synthetic datasets as suggested in INVASE~\citep{yoon2019invase}. $\boldsymbol x$'s are sampled from a multivariate Gaussian distribution with 11 dimensions, and the covariance matrix is an identity matrix. $y$ is dependent on informative and relevant features in $\boldsymbol x$, and is set as a Bernoulli random variable that is 1 with probability $P(y=1|x)=\frac{1}{1+\exp(\gamma)}$. By varying $\gamma$, we can generate different $y$ values across datasets.

\begin{itemize}
    \item \textbf{Syn1:} $\gamma = x_1x_2$
    \item \textbf{Syn2:} $\gamma =  x_1^2 +x_2^2+x_3^2- 4$
    \item \textbf{Syn3:} $\gamma = -10 \times \sin(2x_7) + 2 |x_8| + x_9 + \exp(-x_{10})$
    \item \textbf{Syn4:} if $x_{11} < 0$, $\gamma=x_1x_2$, otherwise, $\gamma=x_3^2+ x_4^2+x_5^2+x_6^2- 4$.
    \item \textbf{Syn5:} if $x_{11}  < 0$, $\gamma=x_1x_2$, otherwise, $\gamma=-10 \times \sin(2x_7) + 2 |x_8| + x_9 + \exp(-x_{10})$.
    \item \textbf{Syn6:}if $x_{11}  < 0$, $\gamma=x_3^2+ x_4^2+x_5^2+x_6^2- 4$, otherwise, $\gamma=-10 \times \sin(2x_7) + 2 |x_8| + x_9 + \exp(-x_{10})$.
\end{itemize}

We generate 10,000 samples for training and 10,000 samples for testing. The response $y$ in the Syn1, Syn2, and Syn3 datasets is determined by the \textbf{identical} subset of features. In contrast, the number of relevant features \textbf{differs} across samples in the Syn4, Syn5, and Syn6 datasets; performance on these datasets can demonstrate the ability of methods to detect instance-wise and population-aware features. We evaluate the performance in terms of true positive rate (TPR) and false discovery rate (FDR).  
As shown in Table~\ref{tab:sythetic_11d}, our method achieves comparable or even better performance than the previous best method INVASE~\citep{yoon2019invase} on Syn1 and Syn3 datasets, demonstrating its capability in selecting relevant features for real data. Moreover, on Syn5 and Syn6 datasets, which are specifically designed for instance-wise feature selection, our algorithm outperforms INVASE~\citep{yoon2019invase} in terms of instance-wise feature selection.

\subsubsection{100-dimensional Synthetic Dataset}

\label{sec:high-dimension}
To further demonstrate the generality of our framework, we experimented on the 100-dimensional synthetic dataset, adding 89 auxiliary, unrelated features and increasing the feature dimension to 100, while still keeping the produced features uncorrelated. As illustrated in Table~\ref{tab:sythetic_100d}, our method remains the best-performing approach across most datasets.

\subsubsection{Correlated Feature Selection}
\label{sec:corre_fs}

 We use the same dataset in Section~\ref{sec:high-dimension}, but with a specially designed covariance matrix $\boldsymbol \Sigma_{i,j}=\frac{1}{2}^{\lvert i-j \rvert}$, where $i$ and $j$ denote the indices of features. To validate whether copula can capture the dependence between features, we compare our method with INVASE, the best competing baseline approach, and a variant of our method, NOLA (\textbf{N}o c\textbf{O}pu\textbf{LA}), where only copula is removed from our framework. The results in Table~\ref{tab:sythetic_100d_correlation} demonstrate that copula performs better than both INVASE and NOLA.

\subsection{Feature Ranking Experiments on Real Datasets}

\label{sec:topk_real_data}

\subsubsection{Dataset Description}
 \label{sec:dataset}
MNIST is a hand-digital dataset comprising 50,000 training samples and 10,000 test samples, drawn from ten classes. Fashion-MNIST is a clothing dataset, containing 60,000 training samples and 10,000 test samples. ISOLET is a speech dataset for predicting which letter-name was spoken, and it includes approximately 8,000 samples with 617 features. We randomly split it into the training set and test set in a 75-25 ratio. The dimensions and sample sizes of the data are summarized in Table~\ref{tab:real_data}.

\begin{table}[ht]
  \centering
  \caption{Details of the real datasets.}
  \scalebox{0.9}{
    \begin{tabular}{llllll}
  \toprule
    Dataset & dimension & size & type & labels \\
    \hline
    MNIST & 784   & 60k & Image & 10 \\
    
    Fashion MNIST & 784   & 70k & Image & 10 \\
   
    ISOLET &  617  & 7.8k  & Audio & 26 \\
    \bottomrule
    \end{tabular}%
 \label{tab:real_data}%
 }

\end{table}%

We now turn to instance-wise top-$k$ feature ranking on real datasets, including MNIST~\citep{lecun1998mnist}, Fashion MNIST~\citep{xiao2017fashion}, and the ISOLET dataset~\citep{cole1990isolet}. For details of the datasets, ablation studies, and visualization results, please refer to the Appendices.

We measure performance by utilizing accuracy metrics and compare each method varying the number of selected features. Figure~\ref{fig:real_data} shows the resulting prediction accuracy, which mostly outperforms other baselines, indicating that our method is a strong candidate in selecting top-$k$ predictive features. The impact of copula and low-rank approximation on final performance is also investigated in the Appendix.

\begin{figure*}[t!]
    \includegraphics[width=2.3in]{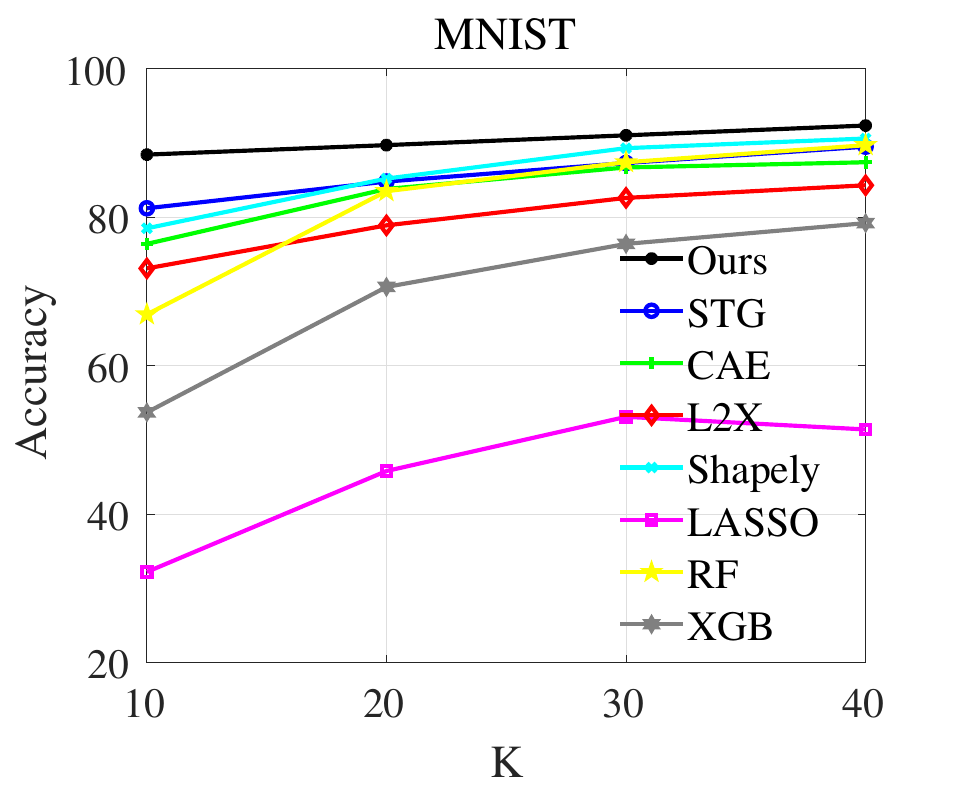}\hspace{-0.20in}
    \includegraphics[width=2.2in]{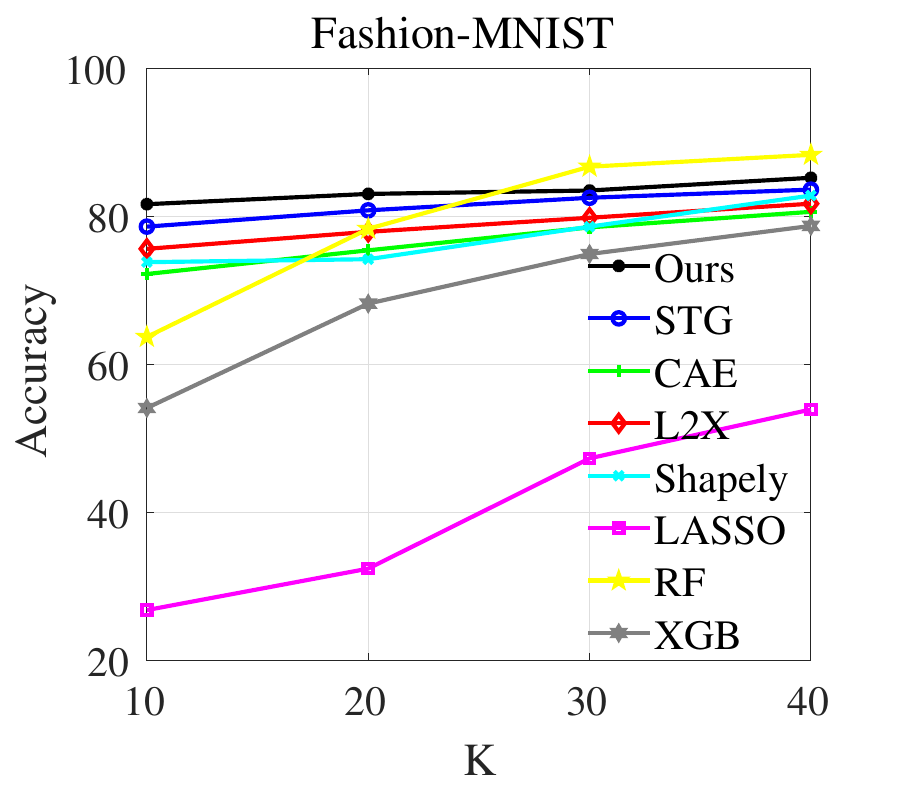}\hspace{-0.20in}
    \includegraphics[width=2.3in]{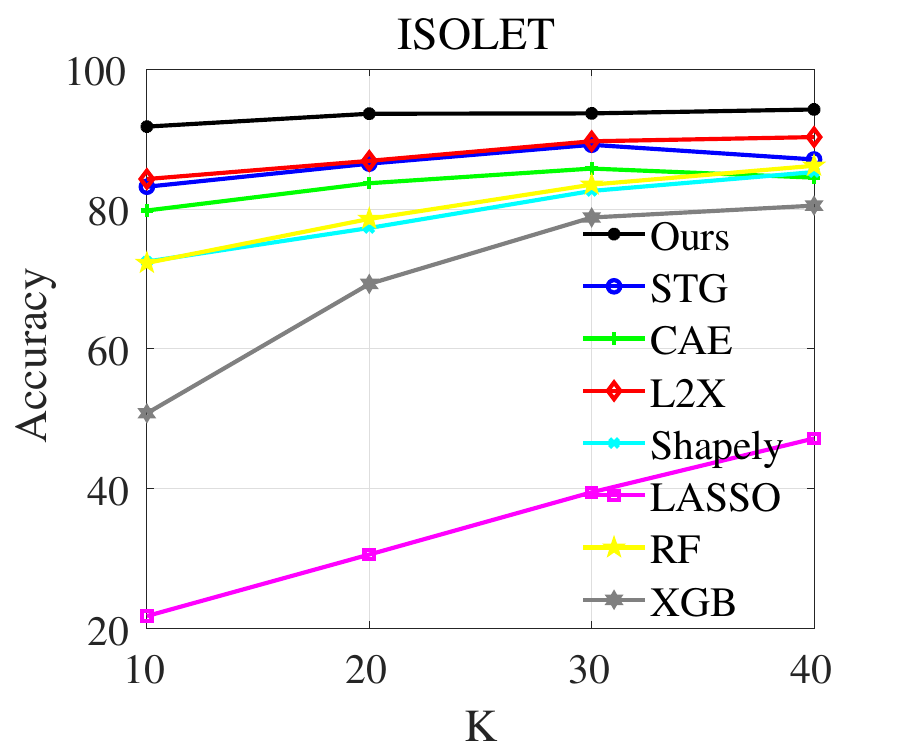}

    \caption{Prediction accuracy vs. the number of selected features $k$ on three real datasets, the value of $k$ varies from 10 to 40. We can discover that our method is superior to most methods. }\label{fig:real_data}

\end{figure*}

\begin{figure*}

{\hspace{-0.4in}    
    \includegraphics[width=7in]{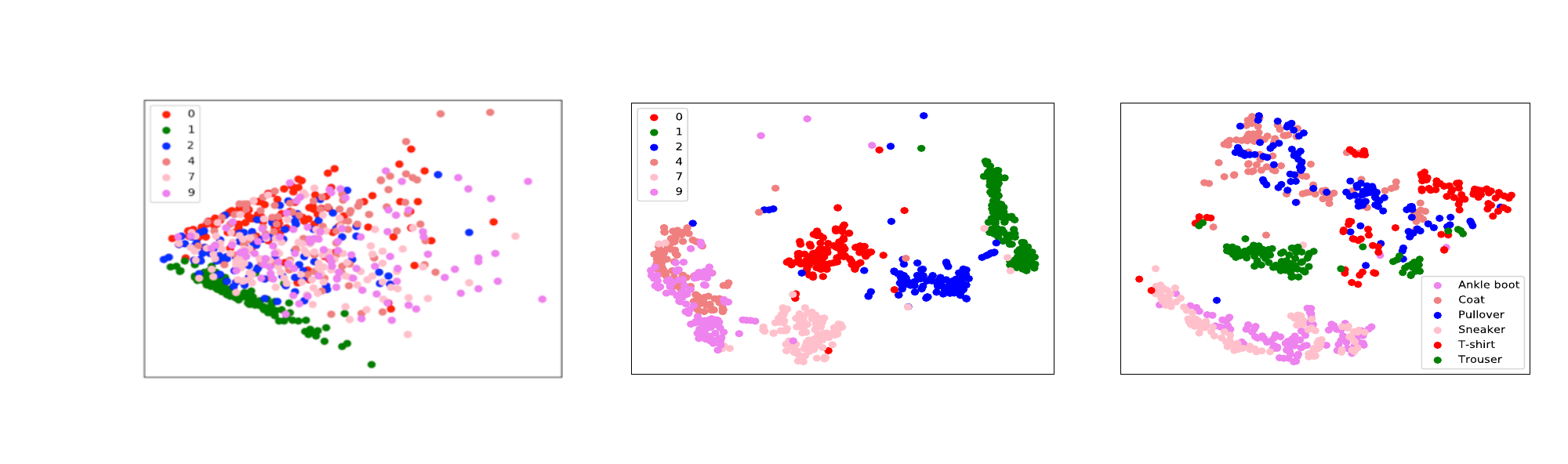}
}

\caption{The left image shows the visualization of induced PSD matrix $\boldsymbol \Sigma \in \mathbb R^{784 \times 784}$ on MNIST dataset for binary feature selection, the middle image illustrates the  $\boldsymbol \Sigma $ for top-$k$ feature ranking on MNIST dataset, right image demonstrates the $\boldsymbol \Sigma$ for top-$k$ feature ranking on Fashion-MNIST dataset.} 
\label{fig:PSD}

\end{figure*}

\subsection{Instance-wise Binary Feature Selection on Synthetic Datasets}

\label{sec:binary_fs}

\subsection{Visualization on PSD Matrix}

To provide an intuitive illustration of the structure of the PSD matrix, we conducted an experiment on the MNIST dataset using copula to perform instance-wise feature selection. We then utilized t-SNE~\citep{van2008visualizing} on the induced PSD matrix to visualize the structure. The left image in Figure~\ref{fig:PSD} provides a demonstration; we can observe that the features with label 0 and label 1 are well separable, since these labels are not similar. Additionally, features with labels 4, 7, and 9 are close to each other, due to their similar appearance. The visualization reflects that the PSD matrix indeed captures the intrinsic character of the feature.

As illustrated in the geometry structure of $\boldsymbol \Sigma$ on the MNIST dataset for binary feature selection, we provide another visualization on the PSD matrix $\boldsymbol \Sigma$ for top-$k$ feature ranking. Following the same profile in Section~\ref{sec:topk_real_data}, we set $k$ to 40 while keeping all other experimental parameters unchanged. The middle image of Figure~\ref{fig:PSD} shows a similar experimental phenomenon and leads to the same conclusion, indicating that the learned $\boldsymbol \Sigma$ can effectively capture the correlated relationship.

We also visualize $\boldsymbol \Sigma$ on the Fashion-MNIST dataset, as shown in the right image of Figure~\ref{fig:PSD}. We observe that the low-dimensional embeddings of Sneaker and Ankle boot are close, as they share similar characteristics. The same conclusion applies to Coat and Pullover, while the embeddings for Trouser are more concentrated and distant from the other image embeddings. This illustrates that copula can effectively capture the relationship between features. Further visualizations can be found in the Appendix.
\subsection{Ablation Studies}
\label{sec:ablation}
\subsubsection{Effects of Low-Rank Approximation} 
We conducted experiments on the MNIST dataset with varying sizes of low-rank approximations. When we performed top-$k$ feature ranking, we set $p$ equal to $k$. The results are presented in Table~\ref{tab:low-rank}, which demonstrate that low-rank approximations can achieve comparable performance to a full-rank scheme. Despite the time-consuming nature of low-rank approximations, whose time complexity is $O(d^3)$, the value of the Gaussian copula cannot be overlooked.

\begin{table}[htbp]
  \centering
  \caption{Comparison on accuracy of low-rank and full-rank approximation on MNIST dataset.}
  \scalebox{0.9}{
    \begin{tabular}{llll}
\toprule
    Dataset & \multicolumn{3}{c}{MNIST} \\
    \hline
    k     & 10    & 20    & 30 \\
   \cline{2-4}
    Full-rank  & \textbf{91.77} & \textbf{92.94} & 93.57 \\

    Low-rank & 91.34 & 92.40  & \textbf{93.93} \\
  \bottomrule
    \end{tabular}%
  \label{tab:low-rank}%
  }

\end{table}%

\subsubsection{Effects of Copula in Top-$k$ Feature Ranking} To investigate the effect of copula on the ultimate performance of top-$k$ feature ranking, we conducted an experiment in accordance with the same protocol outlined in Section~\ref{sec:corre_fs}, wherein only the copula was removed from the framework while all other experimental settings remained the same (hereafter referred to as NOLA). We tested our method on the MNIST dataset, and the results are summarized in Table~\ref{tab:copula_top_k_ranking}. It is evident that our approach significantly enhances the performance when compared to NOLA.

\begin{table}[htbp]
  \centering
  \caption{ Comparison on the prediction accuracy on MNIST dataset with NOLA. }
  \scalebox{0.9}{
    \begin{tabular}{lllll}
   \toprule
    Dataset & \multicolumn{4}{c}{MNIST} \\
    \hline
    k     & 10    & 20    & 30    & 40 \\
  \cline{2-5}
    Ours  & \textbf{91.77} & \textbf{92.94} & \textbf{93.57} & \textbf{93.79} \\
  
    NOLA  & 86.93 & 87.61 & 88.90  & 92.73 \\
  \bottomrule
    \end{tabular}%
  \label{tab:copula_top_k_ranking}%
  }

\end{table}%

\section{Conclusion}

\label{sec:discussion}
In this paper, we have explored the potential of capturing the relationship between correlated features for binary feature selection and top-$k$ feature ranking. To this end, we have successfully incorporated Gaussian copula into the existing feature selection framework with minimal modifications. Our proposed implementation via neural networks has yielded promising results, outperforming many classical and leading methods. We are confident that our work will inspire further research into more effective methods for mining the correlation between features. Possible future directions include capturing tail dependency in features with more sophisticated copulas, or the correlations can be generated via an implicit generative model~\citep{janke2021implicit}.

\bibliography{refs_scholar}
\appendix

\section{Analysis with respect to complexity}
The computational complexity can be estimated as $O(d^3)$, where $d$ is the number of features. Since we have to do matrix decomposition.
To reduce the computational complexity even further, additional techniques can be applied. As one of the proposed solutions in the article~\cite{lee2022self}, we can threshold the correlation matrix 
$\boldsymbol \Sigma$, and grouping features based on agglomerative clustering using the correlation matrix as the similarity measure. This reduces the number of computations required by conducting block-wise matrix multiplication, which scales quadratically with respect to the largest block size (i.e., the number of features grouped in the largest block). By maintaining the correlation structure within each group, the generated gates for features within the same group can be used to select features that are highly correlated with the target variable, while reducing the computational complexity of the algorithm. If the largest block size remains the same, the complexity of generating the correlated gate vectors will only increase linearly with the feature dimension (since the number of blocks would increase linearly).

The CPU calculation time for copula-based feature selection also depends on the implementation, hardware, and software used. To reduce CPU calculation time, some techniques such as parallelization and approximation methods can be used. For example, using GPU computation can significantly speed up feature selection process, particularly for large datasets.

In summary, copula-based feature selection can be computationally expensive, particularly for large datasets with many features. The computational complexity can be estimated as $O(d^3)$, and the CPU calculation time depends on the implementation, hardware, and software used. However, optimization techniques such as parallelization and approximation methods can be used to reduce the computational complexity and CPU calculation time.

\section{Visualization of Chosen Pixels} 
\label{sec:visual}

\begin{figure}[t!]
\centering
\includegraphics[width=4in]{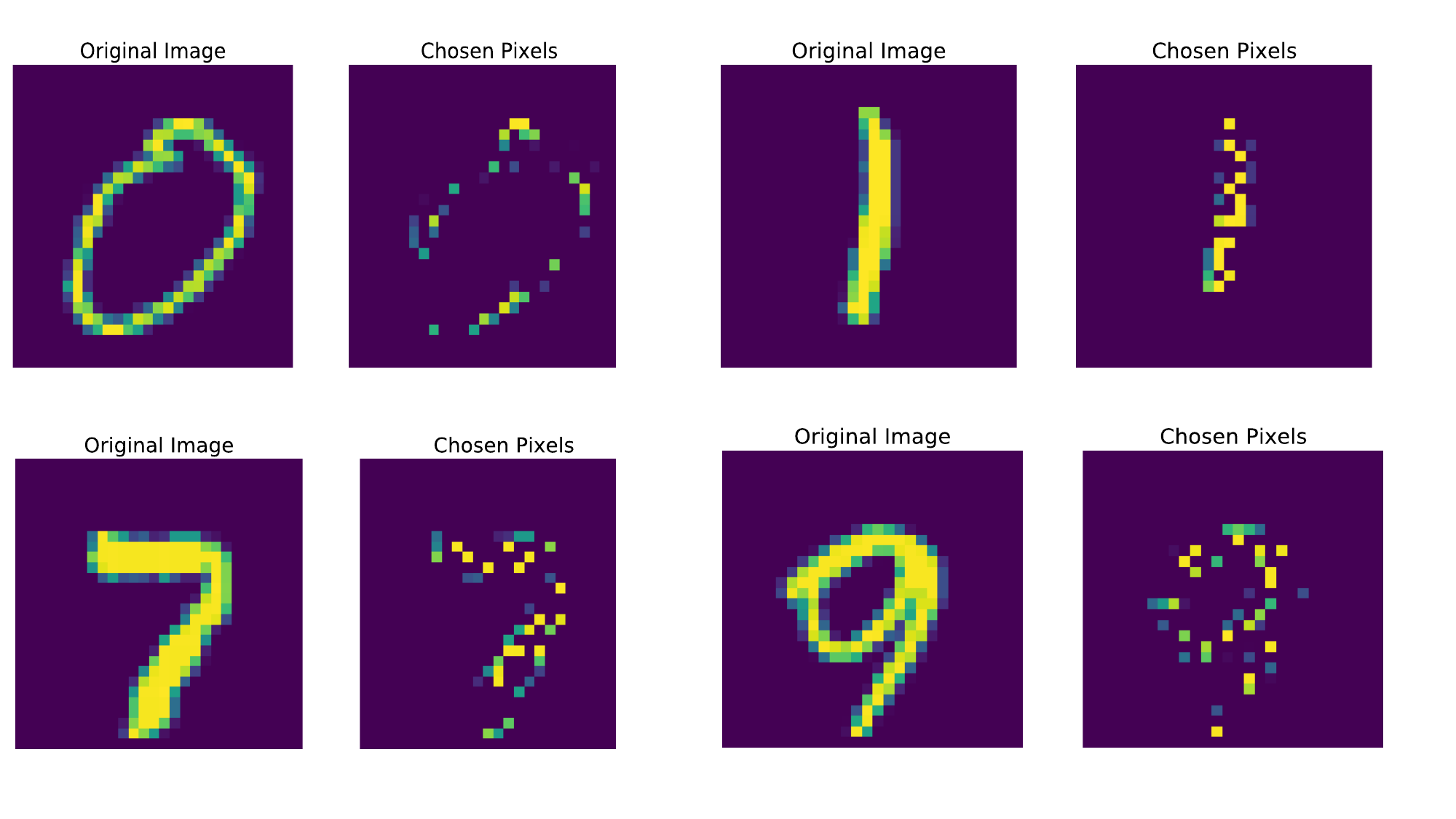}
\caption{Visualization of chosen pixels, we can observe that our method can select meaning and intuitive image features.}
\label{fig:mnist_visual}
\end{figure}

Now, we present a visual representation of the top-120 features with the most significant values in the $\boldsymbol \alpha$ for each sample on the MNIST dataset. For each image, we display the most informative features. As is evident from Figure~\ref{fig:mnist_visual}, we can observe a clear segmentation in the shape of the classification object, indicating that our method is capable of identifying the most important and meaningful features. This visualization demonstrates the remarkable interpretive power of our method.

\section{Neural Reparameterization of Correlated Uniform Noise}
\label{sec:neural_para}

In Algorithm~\ref{algo:neural_corr_noise}, we present the workflow to produce correlated uniform noise $\boldsymbol u \in \mathbb{R}^d$. Here, we offer a further elucidation of the neural reparameterization of it. We denote the output of the hidden layer in ChoiceNet as $f(\boldsymbol{w};\boldsymbol{x})$  as $\boldsymbol{O}_h \in \mathbb{R}^{h_c}$, and the weight parameter of the layer which produces $\sigma$ as $\boldsymbol{W}_{ \sigma}\in \mathbb{R}^{h_c\times d}$. Additionally, $\boldsymbol{W}_{\boldsymbol L}$ stands for the weight parameter of the layer which generates matrix $\boldsymbol L$. If we opt for the low-rank approximation, the shape of $\boldsymbol{W}_{\boldsymbol L}$ is $h_c \times (p\times d)$, otherwise $h_c \times (d\times d)$. The concrete implementation via neural reparameterization is provided as follows:

\begin{algorithm}
\caption{Neural Reparameterization of Correlated Uniform Noise}
\label{algo:neural_corr_noise}
\KwIn{Activation $\boldsymbol{O}_h$ of the hidden layer in ChoiceNet $f(\boldsymbol{w};\boldsymbol{x})$, weight parameters $\boldsymbol{W}_{\sigma}$ and $\boldsymbol{W}_{\boldsymbol L}$}
\KwOut{Correlated uniform noise $\boldsymbol u$.}
$\boldsymbol L =\textrm{ReLU}(\boldsymbol{O}_h\boldsymbol{W}_{\boldsymbol L})$\;

$ \sigma =\textrm{Tanh}(\boldsymbol{O}_h\boldsymbol{W}_{\boldsymbol \sigma})$ \;

Obtain the covariance matrix via low-rank approximation $\boldsymbol \Sigma =\boldsymbol L^T \boldsymbol L + \sigma^2 \boldsymbol I$\ or full-rank approximation $\boldsymbol \Sigma =\boldsymbol L^T \boldsymbol L$\;

Perform Cholesky factorization on $\boldsymbol \Sigma$ to get Cholesky factor $\boldsymbol V$ \;

Generate a Gaussian noise vector $\boldsymbol \zeta$ from standard normal distribution $\boldsymbol{\zeta} \sim \mathcal{N}(0,\,\boldsymbol{I} )$\;

Calculate the Gaussian vector $\boldsymbol{q}=\boldsymbol V \boldsymbol{\zeta}$\;

Apply Gaussian copula to obtain $\boldsymbol u$ as ${u}_i =\Phi_{\boldsymbol R}(q_i), \forall i=1,\ldots,d $\;
\end{algorithm}

\section{Details of Baseline Methods}
\label{sec:baseline}
 The summary of some baseline methods of binary feature selection are as follows.

\begin{itemize}
    \item \textbf{Xgboost} We used the Gini index as the splitting criterion. Specifically, we used the DecisionTreeClassifier function from the scikit-learn library with default parameters. The DecisionTreeClassifier function builds a decision tree by recursively splitting the data based on the feature with the highest Gini importance score. The Gini importance score measures the total reduction of impurity brought by a feature in the decision tree, and features with higher Gini importance scores are considered more important. After building the decision tree, we selected the top-$k$ features with the highest Gini importance scores as the selected features. The value of $k$ was determined using the same experimental setup and evaluation protocol as our proposed method. 

    \textbf{LASSO}  In the paper, LASSO is a linear regression-based feature selection method, where we used $L_1$ regularization to encourage sparsity in the model coefficients. Specifically, we used the LogisticRegression function from the scikit-learn library with $L_1$ penalty and default parameters. The $L_1$ penalty in the LogisticRegression function encourages sparsity in the model coefficients by adding a penalty term to the loss function that is proportional to the absolute value of the coefficients. This penalty term forces the model to select only a subset of the most important features, effectively performing feature selection. After fitting the logistic regression model with $L_1$ regularization, we selected the top-$k$ features with the highest absolute coefficients as the selected features. The value of $k$ was determined using the same experimental setup and evaluation protocol as our proposed method, including a hold-out strategy and 5-fold cross-validation for hyperparameter tuning.
 
    \item \textbf{L2X}~\citep{chen2018learning} introduces a new model interpretation way from the feature selection perspective, it aims to learn a feature selection network that maximizes the mutual information between selected feature subsets and corresponding outputs, we use the official implementation to evaluate the result from the link: 
    
    https://github.com/Jianbo-Lab/L2X
    \item \textbf{INVASE}~\citep{yoon2019invase} proposes an instance-wise feature selection algorithm based on the actor-critic framework to selects most relevant features that minimizes the Kullback-Leibler (KL) divergence between full conditional distribution and suppressed feature distribution, we use the official implementation to evaluate the result from the link:
    
    https://github.com/jsyoon0823/INVASE
    \item \textbf{LIME}~\citep{lime} is a model-agnostic explanation algorithm, it learns an interpretable model locally in a non-redundant and faithful manner by formulating the task as a submodular optimization problem, we use the official implementation to evaluate the result from the link:
    
    https://github.com/marcotcr/lime
    \item \textbf{Shap}~\citep{lundberg2017unified} proposes a novel framework that employs the shapely value to calculate the feature importance, we use the official implementation to evaluate the result from the link:
    
    https://github.com/slundberg/shap
    \item \textbf{Knockoff}~\citep{barber2015controlling} aims to find which variables are important to the response by comparison between knock-off variables and original variables. we use the official implementation to evaluate the result from the link: 
    
    http://web.stanford.edu/group/candes/knockoffs/software/knockoff/
    
  \end{itemize}
In our experiments, we used a neural network as the predictive model in conjunction with Shap and LIME. The neural network had the same size and architecture as the one in our proposed method, in order to ensure fairness and exclude the influence of other factors such as network architecture and size. Regarding the cutoffs, we used a threshold of 0.5 for the neural network to predict the binary class labels. We applied the same threshold when generating the Shap and LIME explanations, to ensure consistency in the interpretation of feature importance across different methods.

   The description of some baseline methods of top-$k$ feature ranking are as follows:
   
   \begin{itemize}
     \item \textbf{STG}~\citep{yamada2020feature} provides a novel algorithm that depends on the Gaussian-based relaxation of the Bernoulli distribution to select relevant features, we use the official implementation to evaluate the result from the link: 
     
     https://github.com/runopti/stg
     \item \textbf{CAE}~\citep{abid2019concrete} introduces an auto-encoder architecture for global feature selection while reconstructing the input, we use the official implementation to evaluate the result from the link: 
     
     https://github.com/mfbalin/Concrete-Autoencoders
 \end{itemize}
 \subsection{Discussions of Baseline Methods}
Our method is capable of discerning pertinent features on a global scale (Syn1, Syn2, and Syn3) as well as on an individual basis (Syn4, Syn5, and Syn6). Notably, our approach surpasses prior neural network-based approaches (INVASE, L2X) in terms of individual performance, with the improvement being more pronounced in Syn4 and Syn5 than in Syn6. Random Forests (RFs) can select global features, thus performing better on Syn1, Syn2, and Syn3, but not as well on Syn4, Syn5, and Syn6. Shapley-based methods calculate the variable importance to elucidate the linear dependency for each sample; however, it is difficult to capture the non-linear relationships in synthetic data, thus rendering it less effective in high-dimensional data. LIME utilizes simple functions to interpret complexity locally; however, it can only explain the particular instance, meaning that it may not be able to accommodate for unseen instances. Knockoff filters features according to certain criteria, yet there is no assurance that this metric is uniquely optimal, thus its performance may vary across datasets.

 \section{Implementation Details}
 \label{sec:impleme}
 \subsection{Synthetic Datasets}
We have employed the same datasets and network structure as in~\citet{yoon2019invase}. For all experiments on the six synthetic datasets, the hyperparameters $h_c$ and $h_p$ were set to 100 and 200, respectively. The activation function of the last layer was a sigmoid. The entire network (ChoiceNet and PredictNet) was trained for 1,000 epochs using Adam with a batch size of 1,000, a weight decay of 0.001, and coefficients of 0.9 and 0.999 for computing running averages of gradient and its square. The constant learning rate was set to 0.0001, and the temperature parameter $t$ was set to either 3 or 5. Finally, cross-validation was employed to tune the hyperparameter $\lambda$.
\subsection{Real Datasets}
When performing top-$k$ feature ranking, $h_c$ and $h_p$ were both set to 16. In all the experiments, we discovered that the Cholesky decomposition to obtain the Cholesky factor matrix $\boldsymbol{L}$ was time-consuming, so we employed the full-rank scheme. We trained the network for 100 epochs using Adam, with coefficients used for computing running averages of gradient set to 0.9 and 0.999. The constant learning rate was set to 0.001 for Fashion-MNIST and MNIST, and 0.0001 for ISOLET. The batch size was set to 1,000 for MNIST and Fashion-MNIST, and 64 for ISOLET, while the temperature $t$ was set to $1$ for all the experiments evaluated on real datasets. All the parameters of the neural networks were randomly initialized. For a fair comparison with baseline methods based on neural networks such as INVASE, L2X, STG, and CAE, we employed the same hyper-parameters and architecture to evaluate the performance.

\end{document}